\newtheorem{theorem}{Theorem}
\newcommand\Sb{\mathbf S}
\newcommand\s{\bm s}
\newcommand\x{\bm{x}}
\newcommand\z{\bm z}
\newcommand\Lb{\mathbf L}
\newcommand\D{\mathbf D}
\newcommand\A{\mathbf A}
\newcommand\R{\mathbb R}
\newcommand\X{\mathbf X}
\newcommand\I{\mathbf I}
\renewcommand{\t}[1]{\tiny{#1}}
\newtcbox{\mymath}[1][]{%
    nobeforeafter,
    colframe=gray!20,
    colback=gray!20, 
    boxrule=.5pt,
    top=1mm,
    bottom=1mm,
    valign=center,
    halign=center,
    box align=center,
    #1}
\title{Simplifying Clustering with Graph Neural Networks}
\author{Filippo Maria Bianchi}
\affil{\small UiT the Arctic University of Norway and NORCE Norwegian Research Centre\\ \small \texttt{filippo.m.bianchi@uit.no}}
\date{\vspace{-5ex}}
\begin{document}
\nldlmaketitle


\begin{abstract}  
The objective functions used in spectral clustering are generally composed of two terms: i) a term that minimizes the local quadratic variation of the cluster assignments on the graph and; ii) a term that balances the clustering partition and helps avoiding degenerate solutions. 
This paper shows that a graph neural network, equipped with suitable message passing layers, can generate good cluster assignments by optimizing only a balancing term.
Results on attributed graph datasets show the effectiveness of the proposed approach in terms of clustering performance and computation time.
\end{abstract}

\section{Introduction}
Traditional clustering techniques partition the data directly in the input space by drawing regular boundaries to separate the clusters.
This makes them unsuitable to handle complex data structures, such as images or time series, which lie in high-dimensional spaces where the relationships between samples are highly non-linear. 
Deep learning techniques allow to transform data samples into suitable representations, which can partitioned into meaningful clusters~\citep{zhou2022comprehensive}.
Remarkably, end-to-end deep learning frameworks can directly map complex data directly into their cluster assignments~\citep{kampffmeyer2019deep}.

Of particular interest for this work, are those data characterized by relationships, or interactions, among samples that are described by a graph. 
Graph Neural Networks (GNNs) are deep learning architectures specifically designed to process and make inference on such data~\citep{hamilton2020graph}.
Recently, GNNs have been adopted to cluster the nodes of an attributed graph based on their features and the graph topology.
Inspired by spectral clustering algorithms, such GNNs optimize an unsupervised loss composed of two terms: the first ensures that connected nodes are assigned to the same cluster; the second is a balancing term, which prevents degenerate solutions both by encouraging the samples to be assigned to only one cluster and the clusters to have similar size~\citep{bianchi2020spectral, tsitsulin2020graph}.

In this work, I considerably simplify the clustering objective optimized by the previous GNN models by introducing a minimalist unsupervised loss, which consists only of a balancing term.
The proposed loss is used to train a GNN composed of standard message passing layers that operate on a particular connectivity matrix.
The empirical evaluation shows that the proposed model significantly reduces the computational complexity, while maintaining competitive clustering performance.

\section{Background}
Let a graph be represented by a tuple $G = \{ \mathcal{V}, \mathcal{E} \}$, with node set $\mathcal{V}$ and edge set $\mathcal{E}$. 
Let $|\mathcal{V}|=N$ and $|\mathcal{E}|=E$ be the number of nodes and edges, respectively.
Each node $i$ is associated with a feature vector $\x_i \in \mathbb{R}^F$.
A graph is conveniently described by its adjacency matrix $\A \in \mathbb{R}^{N \times N}$ and the node features matrix $\X \in \R^{N \times F}$.

\subsection{Spectral Clustering}
Graph clustering aims at partitioning the nodes in $K$ subsets, so that the similarity between nodes in the same subset is maximized.
The most famous graph clustering approach is spectral clustering, which relies on the $k-$way mincut objective to find a partition that minimizes the volume of edges crossing different clusters~\citep{von2007tutorial}.
To avoid degenerate solutions, the objective function includes a balancing term that penalizes partitions where clusters have very unequal sizes.
Specifically, the balanced $K$-cut objective can be defined as a ratio of two set functions:
\begin{equation}
\label{eq:balanced_k_cut}
    \min_{C_1, \dots, C_K} \sum_{k=1}^K \frac{\text{cut}(C_k, \bar{C}_k)}{\hat{B}(C_k)},
\end{equation}
where $\hat{B}(\cdot)$ is a set function that balances the size of the clusters in the partition.
Depending on the choice of $\hat{B}(\cdot)$, one obtains different cuts, such as ratio cut and normalized cut~\citep{hein2011beyond, von2007tutorial}.
The numerator of \eqref{eq:balanced_k_cut} can be expressed in matrix form. 
To see that, first one rewrites $\text{cut}(C_k, \bar{C}_k)$ as 
\begin{equation*}
    \label{eq:cut}
     \sum \limits_{i \in C_k, j \in \bar C_k} a_{ij}(1- z_i z_j) \;\; \text{s.t.} \;\; z_i, z_j \in \{ -1, 1 \},
\end{equation*}
Then,
\begin{equation*}
    \begin{aligned}
    & \sum \limits_{i,j} a_{ij}(1- z_i z_j) = \sum \limits_{i,j} a_{ij} \left(\frac{z_i^2 + z_j^2}{2} - z_i z_j \right) \\
    & = \frac{1}{2} \sum \limits_i \Bigg[ \sum \limits_j a_{ij} \Bigg]z_i^2 + \frac{1}{2} \sum \limits_j \Bigg[ \sum \limits_i a_{ij} \Bigg] z_j^2 \dots \\
    & - \sum \limits_{i,j} a_{ij} z_i z_j \\
    & = \frac{1}{2} \sum \limits_i d_{ii} z_i^2 + \frac{1}{2} \sum \limits_j d_{jj} z_j^2 - \z^T \A \z \\
    & = \z^T \D \z - \z^T \A \z = \z^T \Lb \z,
    \end{aligned}
\end{equation*}
where $\Lb$ is the graph Laplacian.
The relaxation done in spectral clustering to handle the discrete optimization problem is:
\begin{equation}
    \label{eq:general_sc_relax}
    \min_{\z_k \in \{ -1, 1 \}^{N}} \sum \limits_{k=1}^K \frac{\z_k^T \Lb \z_k}{\hat{B}(C_k)} \rightarrow \min_{\s_k \in \R^{N}} \sum \limits_{k=1}^K \frac{\s_k^T \Lb \s_k}{B(C_k)},
\end{equation}
where $B(C_k)$ is the continuous counterpart of $\hat{B}(C_k)$. 
\newline

\textit{Remark 1:}
Besides the graph Laplacian, other operators matching the sparsity pattern of the adjacency matrix can be used in problem~\eqref{eq:general_sc_relax}. One of such operators is the symmetrically normalized Laplacian, $\Lb_s = \I - \D^{-\frac{1}{2}} \A \D^{-\frac{1}{2}}$, which generally yields a different partition as the edges to be cut are weighted by the degree of their end nodes.
\newline

\textit{Remark 2:}
The term $\s^T \Lb \s$ in~\eqref{eq:general_sc_relax} measures the \textit{local quadratic variation} (LQV) of $\s$ on the graph, which is the quadratic variation of $\s$ across adjacent vertices.  
Laplacian smoothing minimizes LQV, by making similar the elements $s_i$ and $s_j$ if nodes $i$ and $j$ are connected. When using $\Lb_s$, the LQV is:

\begin{equation}
    \label{eq:lap_smooth_symm}
    \s^T \Lb_s \s = \frac{1}{2} \sum_{(i,j) \in \mathcal{E}} a_{i,j}\left(\frac{s_i}{\sqrt{d_i}} - \frac{s_j}{\sqrt{d_j}}\right)^2.
\end{equation}

\subsection{Graph Neural Networks}
The main building block of a GNN is the message passing (MP) layer that, first, combines the node features with those of the neighbors on the graph.
Then, the aggregated features are mapped into a new representation by applying an affine transformation and a nonlinearity~\citep{gilmer2017neural}.
A basic MP layer is implemented as follows:
\begin{equation}
    \label{eq:mp}
    \X^{(l+1)} = \texttt{MP}(\X^{(l)}, \tilde \A) = \sigma(\tilde \A \X^{(l)} \boldsymbol{\Theta}_l),
\end{equation}
where $\X^{(l)}$ and $\X^{(l+1)}$ are, respectively, the input and output node features of the $l$-th MP layer, $\tilde \A$ is an operator matching the sparsity pattern of $\A$, $\sigma$ is a nonlinear activation function, and $\boldsymbol{\Theta}_l$ are trainable parameters.

\subsection{Clustering with GNNs}
MinCutPool~\citep{bianchi2020spectral} is a GNN layer that computes soft cluster assignments as:
\begin{equation}
    \label{eq:soft_assign}
    \Sb = \texttt{softmax} \left( \texttt{MLP} \left( \bar \X, \boldsymbol{\Theta}_\text{MLP} \right) \right) \in \mathbb{R}^{N \times K},
\end{equation}
where $K$ is the number of clusters, $\bar \X$ are node features generated by a stack of one or more MP layers, and $\texttt{MLP}(\cdot)$ denotes a multi-layer perceptron with trainable parameters $\boldsymbol{\Theta}_\text{MLP}$.
The \texttt{softmax} function ensures that $\Sb$ is a proper cluster assignment matrix, since $\Sb \bm{1} = \bm{1}$ and $0 \leq s_{i,j} \leq 1$.

To learn the cluster assignments, MinCutPool optimizes the following unsupervised loss:
\begin{equation}
    \label{eq:mincutpool}
    \mathcal{L}_{mc} = 
    \underbrace{- \frac{\textrm{Tr} ( \Sb^T \tilde \A \Sb )}{\textrm{Tr} ( \Sb^T \tilde \D \Sb)}}_{\mathcal{L}_q} + 
    \underbrace{\bigg{\lVert} \frac{\Sb^T\Sb}{\|\Sb^T\Sb\|_F} - \frac{\I_K}{\sqrt{K}}\bigg{\rVert}_F}_{\mathcal{L}_b}, 
\end{equation}
where $\tilde{\A} = \D^{-\frac{1}{2}}\A\D^{-\frac{1}{2}}$ and $\tilde \D$ is the degree matrix of $\tilde \A$.
The first term, $\mathcal{L}_q$, minimizes the LQV, while $\mathcal{L}_b$ is a balancing term that helps prevent degenerate solutions.
Compared to problem~\eqref{eq:general_sc_relax}, the LQV and the balancing terms are summed rather than taking their ratio.
This helps both to prevent numerical issues when $\mathcal{L}_b$ gets too small and to keep $\mathcal{L}_{mc}$ in a controlled range, which is desirable when the GNN must also minimize other losses. 

Similarly to MinCutPool, DMoN~\citep{tsitsulin2020graph} optimizes a loss composed of an LQV and a balancing term:
\begin{equation}
\label{eq:dmon}
    \mathcal{L}_{dm} = 
    \underbrace{- \frac{\text{Tr}(\Sb^T\tilde\A\Sb)}{2E}}_{\mathcal{L}_m} 
    + 
    \underbrace{\frac{\sqrt{K}}{N} \left\| \sum_i \Sb_i^T \right\|_F -1}_{\mathcal{L}_r},
\end{equation}
where $\tilde\A = \A - \mathbf{d}^T\mathbf{d}$ and $\mathbf{d}$ is the degree vector of $\A$. The term $\mathcal{L}_m$ pushes strongly connected components to the same cluster, while $\mathcal{L}_r$ is a regularization term that penalizes the degenerate solutions.

\section{Proposed approach}

The cluster assignments $\Sb$, computed as in~\eqref{eq:soft_assign}, can be optimized by minimizing:
\begin{empheq}[box={\mymath[sharp corners]}]{equation}
    \label{eq:loss}
    \mathcal{L} = - \textrm{Tr} \left( \sqrt{\Sb^T\Sb} \right)
\end{empheq}
The proposed loss simplifies $\mathcal{L}_{mc}$ and $\mathcal{L}_{dm}$ considerably as it consists only of a balancing term.
Such a simplification offers the following advantages:
\begin{itemize}
    \item The computational complexity is reduced, as less operations are needed to compute $\mathcal{L}$.
    \item Fewer competing terms in the loss can ease the training and speed-up the convergence.
    \item There are no ratios in $\mathcal{L}$, which could cause numerical instability during training.
\end{itemize}

Despite its simplicity, the proposed loss can still yield an optimal clustering assignment.
The key insights that motivated its design are presented in the following.

\subsection{Removal of the LQV term}
\label{sec:lqv}
The absence of the LQV term in the loss is compensated by the presence of the MP layers that generate the features $\bar \X$ used to compute the cluster assignments $\Sb$ in \eqref{eq:soft_assign}.
In particular, consider the following MP layer:
\begin{empheq}[box={\mymath[sharp corners]}]{equation}
    \label{eq:mp_LQV}
    \small
    \X^{(l+1)} = \sigma\left( \left[\I - \delta (\I - \D^{-\frac{1}{2}}\A \D^{-\frac{1}{2}}) \right] \X^{(l)} \boldsymbol{\Theta}_l \right)
\end{empheq}
where $\tilde \A = \I - \delta (\I - \D^{-\frac{1}{2}}\A \D^{-\frac{1}{2}})$ is an operator matching the sparsity pattern of the graph and $\delta$ is an hyperparameter. 
When $\delta=1$, Eq.~\ref{eq:mp_LQV} reduces to $\X^{(l+1)} = \sigma(\D^{-1/2} \A \D^{-1/2} \X^{(l)} \boldsymbol{\Theta}_l)$, which is very similar to the update equation used in popular MP layers~\citep{defferrard2016convolutional}.
When $\delta=0$, the node features are not aggregated with those of the neighbors and the MP layer becomes equivalent to a dense layer of an MLP.

\begin{theorem}
The MP layer in \eqref{eq:mp_LQV} minimizes the LQV of the node features $\X$.
\end{theorem}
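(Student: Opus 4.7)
The plan is to identify the aggregation step of the MP layer with a gradient descent step on the LQV objective. Since the operator in Eq.~\eqref{eq:mp_LQV} can be written as $\tilde\A = \I - \delta\Lb_s$ with $\Lb_s = \I - \D^{-\frac{1}{2}}\A\D^{-\frac{1}{2}}$, it is natural to interpret its action on $\X^{(l)}$ as a descent step on a quadratic form built from $\Lb_s$.

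First, I would write the LQV of the feature matrix as a sum of quadratic forms over the feature columns. Letting $\x_k \in \mathbb{R}^{N}$ denote the $k$-th column of $\X$, the total LQV is
\begin{equation*}
    f(\X) = \mathrm{Tr}(\X^T \Lb_s \X) = \sum_{k=1}^{F} \x_k^T \Lb_s \x_k,
\end{equation*}
so that each term is exactly the quantity expanded in Eq.~\eqref{eq:lap_smooth_symm}. Since $\Lb_s$ is symmetric and positive semidefinite, $f$ is a convex quadratic in $\X$ with gradient $\nabla_{\X} f(\X) = 2\,\Lb_s\,\X$.

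Next, I would identify the propagation $\X \mapsto \tilde\A \X$ with a single gradient-descent step on $f$. A step with learning rate $\eta$ gives $\X - \eta\,\nabla_\X f(\X) = (\I - 2\eta\,\Lb_s)\,\X$; choosing $\eta = \delta/2$ yields exactly $(\I - \delta\,\Lb_s)\,\X = \tilde\A\,\X$, which is the aggregation prescribed by Eq.~\eqref{eq:mp_LQV} before the affine transformation $\boldsymbol{\Theta}_l$ and the nonlinearity $\sigma$ are applied. Hence, for $\delta$ small enough that $\I - \delta\,\Lb_s$ is a contraction on the image of $\Lb_s$, each application of the aggregation operator strictly decreases $f$, i.e.\ the LQV of the node features.

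The subtle point, which I would flag as the main obstacle, is that the layer also applies $\sigma(\cdot\,\boldsymbol{\Theta}_l)$, and these operations can in principle increase the LQV. I would address this by emphasizing that the theorem concerns the built-in inductive bias of the propagation step: independently of the learned transformation, the operator $\tilde\A$ is the (rescaled) gradient-descent operator of $f$, so stacking such MP layers endows the GNN with an intrinsic LQV-minimizing dynamic on $\X$. This is precisely what allows the loss in Eq.~\eqref{eq:loss} to drop the LQV term, since the smoothing is implicitly performed by the architecture rather than the objective.
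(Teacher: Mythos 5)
Your proposal takes essentially the same route as the paper: both identify the aggregation operator $\I - \delta \Lb_s$ with a gradient-descent step on the quadratic form $\x^T \Lb_s \x$ (the paper computes the componentwise derivative of the expanded sum in Eq.~\eqref{eq:lap_smooth_symm}, you work with the trace form over feature columns, which is the same computation). Your version is in fact slightly more careful, since you track the factor of $2$ in the gradient by setting $\eta = \delta/2$ and explicitly flag that $\sigma$ and $\boldsymbol{\Theta}_l$ fall outside the claim, a caveat the paper's proof leaves implicit.
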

\begin{proof}
Let
\begin{equation*}
    ||\bm{x}||_{LQV} = \frac{1}{2}\sum_{i=1}^N \sum_{j=1}^N a_{i,j} \left(\frac{x_i}{\sqrt{d_i}} - \frac{x_j}{\sqrt{d_j}}\right)^2
\end{equation*}

To minimize the LQV, we first compute the derivative with respect to the $i$-th component:
\begin{align}
\label{eq:gradient_descent_LQV_alt}
(\partial||\bm{x}||_{LQV})_i &= \frac{1}{\sqrt{d_i}} \sum_j a_{i,j} \left( \frac{x_i}{\sqrt{d_i}} - \frac{x_j}{\sqrt{d_j}} \right) \nonumber\\
                             &= \frac{x_i}{d_i} \sum_j a_{i,j} - \sum_j \frac{a_{i,j}}{\sqrt{d_i}\sqrt{d_j}} x_j \nonumber\\
                             &= (\I \x)_i - (\D^{-1/2} \A \D^{-1/2} \x)_i \nonumber
\end{align}
The whole gradient is given by:
\begin{equation*}
    \nabla(||\bm{x}||_{LQV}) = (\I - \D^{-1/2} \A \D^{-1/2}) \bm{x}
\end{equation*}

The following update minimizes the LQV of the node features with gradient descent:
\begin{align*}
    \bm{x}^{(l+1)} & = \bm{x}^{(l)} - \delta  \nabla(||\bm{x}||_{LQV})^{(l)}\\
                   & = \bm{x}^{(l)} - \delta  (\I - \D^{-1/2} \A \D^{-1/2}) \bm{x}^{(l)}
\end{align*}
where $\delta$ indicates the gradient step.
\end{proof}

Clearly, there is a difference in minimizing the LQV of $\X$ rather than the LQV of $\Sb$ directly, as done by the terms $\mathcal{L}_q$ in \eqref{eq:mincutpool} and $\mathcal{L}_m$ in \eqref{eq:dmon}.
Nevertheless, being \eqref{eq:soft_assign} a smooth function, if two node features $\x_i$ and $\x_j$ are similar they will likely be mapped into similar cluster assignments $\s_i$ and $\s_j$.

\subsection{Optimality of the proposed balancing term}
The purpose of the balancing term is to encourage a partition where:
\begin{itemize}
    \item the nodes are assigned with high confidence to only one cluster;
    \item the number of elements in each cluster is approximately the same.
\end{itemize}

Typical degenerate solutions are those that violate one these two requirements. 
In particular, the first degenerate solution is when samples are uniformly assigned to all cluster, i.e., $\s_i = [1/K, 1/K, \dots, 1/K]$, $i=1, \dots, N$. The second degenerate solution occurs when all samples are assigned to the same cluster, e.g., $\s_i = [1, 0, \dots, 0]$, $i=1, \dots, N$.

It is straightforward to see that the first degenerate solution is avoided when $\mathcal{L}$ in \eqref{eq:loss} is minimized. 
The trace is the sum of the elements along the diagonal of $\sqrt{\Sb^T\Sb}$. 
Since $0 \leq s_{i,j} \leq 1$, the trace is maximized when each cluster assignment $\s_i$ has one entry equal to 1, i.e., when $s_{i,j} \in \{0,1\}$. 
On the other hand, if a sample $i$ is assigned to more than one cluster, due to the \texttt{softmax} normalization $\s_i$ will have two or more non-zero entries with values less than 1.
As a consequence, at least one value in the off-diagonal of $\sqrt{\Sb^T\Sb}$ would be non-zero and the trace would be smaller. 

The second degenerate solution is avoided when the clusters assume equal size, which is ensured by the following theorem.

\begin{theorem}
The optimum of the problem
\begin{equation}
    \label{eq:opt_constr}
    \max \limits_{\Sb \mathbf{1} = \mathbf{1}, s_{i,j} \in \{0,1\}} \rm{Tr} \left( \sqrt{\Sb^T\Sb} \right)
\end{equation}
is a balanced partition with clusters of size $\frac{N}{K}$.
\end{theorem}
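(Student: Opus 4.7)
The plan is to reduce the matrix optimization in \eqref{eq:opt_constr} to a simple scalar problem over the cluster sizes. The hard-assignment constraints $\Sb\mathbf{1}=\mathbf{1}$ and $s_{i,j}\in\{0,1\}$ force each row of $\Sb$ to be a standard basis vector in $\R^K$, so $\Sb$ is a one-hot cluster indicator matrix. The first step is to observe that, under these constraints, $\Sb^T\Sb$ is diagonal: the $(k,k)$ entry counts the number of nodes assigned to cluster $k$, which I denote $n_k$, while the off-diagonal entries vanish because distinct standard basis vectors are orthogonal.

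Taking the (matrix) square root of this diagonal positive semidefinite matrix yields $\sqrt{\Sb^T\Sb}=\mathrm{diag}(\sqrt{n_1},\dots,\sqrt{n_K})$, so the objective collapses to $\mathrm{Tr}(\sqrt{\Sb^T\Sb})=\sum_{k=1}^K \sqrt{n_k}$. Combined with the conservation of mass $\sum_k n_k = N$ (which follows from $\Sb\mathbf{1}=\mathbf{1}$ by summing the rows), Theorem~2 reduces to maximizing $\sum_{k=1}^K \sqrt{n_k}$ subject to $n_k \geq 0$ and $\sum_k n_k = N$.

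This scalar problem is handled directly by the strict concavity of $\sqrt{\cdot}$. Jensen's inequality gives
$$\frac{1}{K}\sum_{k=1}^K \sqrt{n_k} \leq \sqrt{\frac{1}{K}\sum_{k=1}^K n_k} = \sqrt{\tfrac{N}{K}},$$
hence $\sum_k \sqrt{n_k} \leq \sqrt{KN}$, with equality if and only if $n_1=\cdots=n_K=N/K$. Equivalently, a Lagrange multiplier argument setting $\frac{1}{2\sqrt{n_k}}=\lambda$ for all $k$ forces every $n_k$ to coincide, confirming that the maximizer is the balanced partition.

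The main obstacle is essentially presentational rather than mathematical: the only substantive step is recognizing that the hard-assignment constraint orthogonalizes the columns of $\Sb$ and reduces $\Sb^T\Sb$ to a diagonal of cluster sizes, after which concavity of the square root delivers the balanced partition as the unique maximizer. One small caveat worth noting is that $N/K$ may not be an integer, in which case the bound is still attained asymptotically and the exact optimum is a partition whose sizes differ by at most one; this technicality can be mentioned in a remark without altering the main argument.
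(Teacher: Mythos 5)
Your proposal is correct and follows the same overall reduction as the paper: both arguments collapse the objective to maximizing $\sum_{k=1}^K \sqrt{n_k}$ over cluster sizes summing to $N$. The difference is in the two steps surrounding that reduction. First, you explicitly justify why $\mathrm{Tr}\bigl(\sqrt{\mathbf{S}^T\mathbf{S}}\bigr)$ equals $\sum_k \sqrt{n_k}$ under the hard-assignment constraints, by noting that one-hot rows make the columns of $\mathbf{S}$ orthogonal and hence $\mathbf{S}^T\mathbf{S}$ diagonal; the paper uses this identity implicitly when it writes the Lagrangian as $\sum_i \sqrt{c_{i,i}} - \lambda(\sum_i c_{i,i} - N)$ without remarking that the off-diagonal entries of $\mathbf{C}$ vanish. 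Second, the paper finishes with Lagrange multipliers, which only identifies a stationary point and does not by itself certify a global maximum, whereas your Jensen / strict-concavity argument yields the bound $\sum_k \sqrt{n_k} \le \sqrt{KN}$ together with the equality condition $n_1 = \dots = n_K = N/K$, so it delivers both optimality and uniqueness in one stroke. Your closing remark about $N/K$ not being an integer is a genuine caveat that applies equally to the paper's statement (the constrained variables $n_k$ are integers, so the stated optimum is attained exactly only when $K \mid N$); it is worth keeping as a remark.
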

\begin{proof}
Let $\Sb^T\Sb = \mathbf{C} \in \mathbb{R}^{K \times K}$.
The conditions $\Sb \mathbf{1} = \mathbf{1}$ and $s_{i,j} \in \{0,1\}$ imply the constraint $\sum_{i=1}^K c_{i,i} = N$, being $c_{i,i}$ the volume of samples assigned to cluster $i$.
The solution  of the constrained optimization problem can be found using Lagrange multipliers. Specifically, the Lagrangian is
\begin{equation*}
    L(\mathbf{C}, \lambda) = \sum_{i=1}^K \sqrt{c_{i,i}} - \lambda\left(\sum_{i=1}^K c_{i,i} - N \right).
\end{equation*}
To be an optimum, $\frac{\partial L(\mathbf{C}, \lambda)}{\partial c_{i,i}} = 0$ must hold for for each $i$. 
Hence, by taking the derivative one obtains
\begin{equation}
\label{proof_eq1}
    \frac{1}{2 \sqrt{c_{i,i}}} - \lambda = 0 \rightarrow c_{i,i} = \frac{1}{4 \lambda^2}
\end{equation}
Solving for $\lambda$ can be done by considering the constraint $\sum_{i=1}^K c_{i,i} = N$, which gives
\begin{equation}
\label{proof_eq2}
    \sum_{i=1}^K \frac{1}{4 \lambda^2} = N \rightarrow \lambda = \sqrt{\frac{K}{4N}}.
\end{equation}
Finally, substituting \eqref{proof_eq2} in \eqref{proof_eq1} gives $c_{i,i} = \frac{N}{K}$.
\end{proof}

\subsection{Numerical computation}
The matrix $\Sb^T\Sb$ is symmetric and is either positive definite or positive semi-definite
The latter case occurs when at least one cluster is completely empty, i.e., $s_{i,k}=0$ for each vertex $i$.
Either way, the matrix $\sqrt{\Sb^T\Sb}$ is unique and real.
The matrix square root is computed by first reducing the matrix to quasi-triangular form with the real Schur decomposition. The square root of the quasi-triangular matrix is then computed directly.

The computational cost of the proposed loss is dominated by the matrix multiplication $\Sb^T\Sb$ and by the Schur decomposition.
The first costs $\mathcal{O}(NK^2)$, while the second costs $\mathcal{O}(K^3)$. Since $N > K$, the total cost is $\mathcal{O}(NK^2)$.
On the other hand, the cost in MinCutPool and DMoN is dominated by the numerators of the LQV terms in \eqref{eq:mincutpool} and \eqref{eq:dmon}, whose complexity is $\mathcal{O}(N^2K + NK^2)$. When using sparse operations, the latter cost is reduced to $\mathcal{O}(EK + NK^2)$, which is still greater than the complexity of the proposed loss.

\section{Experimental evaluation}

\bgroup
\def\arraystretch{1} 
\setlength\tabcolsep{1.0em} 
\begin{table}[!ht]
\footnotesize
\centering
\label{tab:cit_dataset}
\begin{tabular}{lcccc}
\cmidrule[1.5pt]{1-5}
\textbf{Dataset} & $N$ & $E$ & $F$ & $K$ \\
\cmidrule[.5pt]{1-5}
Cora     & 2,708  & 10,556   & 1,433 & 7 \\
Citeseer & 3,327  & 9,104   & 3,703 & 6 \\
Pubmed   & 19,717 & 88,648  & 500   & 3 \\
DBLP     & 17,716 & 105,734 & 1,639 & 4 \\
\cmidrule[1.5pt]{1-5}
\end{tabular}
\vspace{-.3cm}
\caption{\small Details of the citation datasets: number of nodes ($N$), number of edges ($E$), dimension of the feature vectors ($F$), and number of classes/clusters ($K$).} 
\end{table}
\egroup

The clustering performance are tested on four citation datasets, whose details are reported in Tab.~\ref{tab:cit_dataset}.
In each dataset, the number of clusters $K$ is set equal to the number of classes to which the nodes belong to.
The distribution of the node classes for each dataset, depicted in Fig.~\ref{fig:class_distr}, shows that the classes are rather unbalanced, which makes the clustering problem non-trivial.

\begin{figure}[!ht]
    \centering
    \includegraphics[width=.9\columnwidth]{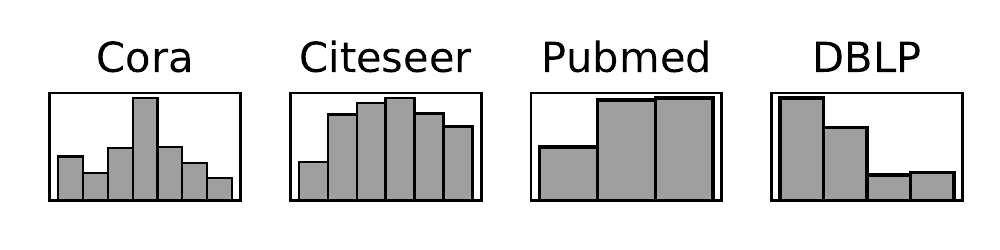}
    \vspace{-.8cm}
    \caption{\small Distribution of the node classes.}
    \label{fig:class_distr}
\end{figure}

The proposed architecture, referred to as \textit{Just Balance GNN} (JBGNN), consists of a stack of MP layers, which outputs $\bar \X$, followed by an MLP, which outputs $\Sb$.
The MP layers are implemented by a GCN~\cite{kipf2016semi} operating on the connectivity matrix $\tilde \A$ defined in \eqref{eq:mp_LQV}.
The whole JBNN is optimized only by minimizing the loss in \eqref{eq:loss}.
The following hyperparameters are used for each dataset:
$\delta$: 0.85;
MP activation: ReLU;
MP channels: 64;
MP layers: 10;
MLP activation: ReLU;
MLP channels: 16;
MLP hidden layers: 1;
learning rate: 5e-5;
epochs: 2,000.
The software implementation of JBNN is available online
\footnote{\url{https://github.com/FilippoMB/Simplifying-Clustering-with-Graph-Neural-Networks}}.

\bgroup
\def\arraystretch{1} 
\setlength\tabcolsep{.3em} 
\begin{table*}[!ht]
    \centering
    \footnotesize
    \begin{tabular}{lcccccccc}
        \cmidrule[1.5pt]{1-9}
        & \multicolumn{2}{c}{\textbf{Cora}} & \multicolumn{2}{c}{\textbf{Citeseer}} & \multicolumn{2}{c}{\textbf{Pubmed}} & \multicolumn{2}{c}{\textbf{DBLP}}\\
        & ACC & NMI & ACC & NMI & ACC & NMI & ACC & NMI \\
        \midrule
        SC       & 0.298             & 0.028             & 0.217             & 0.014             & 0.589             & 0.182             & 0.458             & 0.023  \\
        DeepWalk & 0.229 {\t(0.020)} & 0.064 {\t(0.024)} & 0.193 {\t(0.002)} & 0.004 {\t(0.001)} & 0.361 {\t(0.001)} & 0.001 {\t(0.000)} & 0.266 {\t(0.001)} & 0.001 {\t(0.000)} \\
        Node2vec & 0.229 {\t(0.025)} & 0.060 {\t(0.029)} & 0.194 {\t(0.003)} & 0.004 {\t(0.001)} & 0.362 {\t(0.001)} & 0.001 {\t(0.000)} & 0.272 {\t(0.001)} & 0.001 {\t(0.000)} \\
        NetMF    & 0.389             & 0.251             & 0.277             & 0.127             & 0.448             & 0.058             & 0.455             & 0.037  \\
        \midrule
        GAE      & 0.464 {\t(0.062)} & 0.327 {\t(0.051)} & 0.381 {\t(0.038)} & 0.162 {\t(0.029)} & 0.588 {\t(0.071)} & 0.235 {\t(0.044)} & 0.416 {\t(0.035)} & 0.111 {\t(0.028)} \\
        VGAE     & \textbf{0.572} {\t(0.054)} & \textbf{0.437} {\t(0.028)} & 0.360 {\t(0.038)} & 0.156 {\t(0.034)} & \textbf{0.610} {\t(0.060)} & \textbf{0.24}5 {\t(0.043)} & 0.507 {\t(0.047)} & 0.212 {\t(0.021)} \\
        \midrule
        DiffPool & 0.472 {\t(0.010)} & 0.306 {\t(0.005)} & 0.336 {\t(0.007)} & 0.180 {\t(0.008)} & 0.418 {\t(0.002)} & 0.084 {\t(0.001)} & 0.370 {\t(0.042)} & 0.045 {\t(0.043)} \\
        DMoN     & 0.488 {\t(0.063)} & 0.357 {\t(0.042)} & 0.364 {\t(0.043)} & 0.196 {\t(0.029)} & 0.559 {\t(0.042)} & 0.192 {\t(0.048)} & 0.590 {\t(0.039)} & 0.334 {\t(0.026)} \\
        MinCut   & 0.534 {\t(0.041)} & 0.406 {\t(0.029)} & \textbf{0.497} {\t(0.049)} & \textbf{0.295} {\t(0.029)} & 0.572 {\t(0.034)} & 0.208 {\t(0.014)} & 0.538 {\t(0.033)} & 0.297 {\t(0.024)} \\
        JBGNN    & 0.457 {\t(0.025)} & 0.351 {\t(0.128)} & 0.334 {\t(0.019)} & 0.140 {\t(0.024)} & 0.564 {\t(0.023)} & 0.223 {\t(0.013)} & \textbf{0.607} {\t(0.008)} & \textbf{0.359} {\t(0.008)} \\
        \cmidrule[1.5pt]{1-9}
    \end{tabular}
    \vspace{-.3cm}
    \caption{\small Clustering performance metrics for each dataset. For method with stochastic components, the mean and the standard deviation (in brackets) obtained from 10 independent runs is reported.}
    \label{tab:results}
\end{table*}
\egroup

The performance of JBGNN are compared against three classes of methods. 
The first are algorithms that generate node embeddings based only on the adjacency matrix. The node embeddings are then clustered with $k$-means.
Representatives of this category are Spectral Clustering (SC), DeepWalk~\citep{perozzi2014deepwalk}, Node2vec~\citep{grover2016node2vec}, and NetMF~\citep{qiu2018network}.
The second class of methods are neural networks that generate node embeddings by accounting both for the adjacency matrix and for the node features. 
Also in this case, the learned embeddings are clustered with $k$-means in a post-processing step.
The chosen representatives for this category are the Graph AutoEncoder (GAE) and Variational Graph AutoEncoder (VGAE)~\citep{kipf2016variational}.
Finally, the last class of methods are end-to-end GNN models that directly generate soft cluster assignments $\Sb$ by accounting both for the graph connectivity and the node features. The hard cluster assignments are computed as $\bar \s = \texttt{argmax}(\Sb)$.
DiffPool~\citep{ying2018hierarchical}, DMoN~\citep{tsitsulin2020graph}, MinCutPool~\citep{bianchi2020spectral}, and the proposed JBGNN belong to this class.
To make the comparison fair between DiffPool, DMoN, MinCutPool, and JBGNN, the GNN architectures are configured to have the same capacity (number of layers and trainable parameters) and are trained for the same number of epochs. The only difference is in the loss and in how the cluster assignments $\Sb$ are computed (Diffpool uses an MP layer rather than the MLP in \eqref{eq:soft_assign} to compute $\Sb$).

The clustering performance is evaluated in terms of the normalized mutual information (NMI) between true class labels $\bm y$ and cluster assignments $\bar \s$ and the cluster accuracy (ACC), defined as
\begin{equation}
    \label{eq:clust_acc}
    \text{ACC} = \frac{1}{N} \sum \limits_{i=1}^N \theta \left(y_i, h(\bar s_i) \right),
\end{equation}
where $h(\cdot)$ maps the hard-cluster assignment $\bar s_i$ to the best matching class label according to Kuhn-Munkres algorithm, and $\theta$ is the Heaviside step function, which is 1 when $y_i$ and $h(\bar s_i)$ are equal, and zero otherwise.
The results are reported in Tab.~\ref{tab:results}.
Methods with stochastic components are trained and evaluated 10 different times.

\begin{table}[!ht]
    \centering
    \footnotesize
    \begin{tabular}{lcccc}
        \cmidrule[1.5pt]{1-5}
        & \textbf{Cora} & \textbf{Citeseer} & \textbf{Pubmed} & \textbf{DBLP} \\
        \midrule
        Diffpool   & 0.009                         & 0.030                        & 0.234                          & 0.229                         \\
           & {\color[HTML]{CB0000} \t +200\%} & {\color[HTML]{CB0000} \t +42\%} & {\color[HTML]{CB0000} \t +1376\%} & {\color[HTML]{CB0000} \t +477\%} \\
        DMoN       & 0.006                         & 0.028                        & 0.030                          & 0.061                         \\
                   & {\color[HTML]{CB0000} \t +100\%} & {\color[HTML]{CB0000} \t +33\%} & {\color[HTML]{CB0000} \t +76\%}   & {\color[HTML]{CB0000} \t +27\%}  \\
        MinCut & 0.006                         & 0.029                        & 0.030                          & 0.061                         \\
                   & {\color[HTML]{CB0000} \t +100\%} & {\color[HTML]{CB0000} \t +38\%} & {\color[HTML]{CB0000} \t +76\%}   & {\color[HTML]{CB0000} \t +27\%}  \\
        \midrule
        JBGNN      & \textbf{0.003}                         & \textbf{0.021}                        & \textbf{0.017}                          & \textbf{0.048}               \\
        \cmidrule[1.5pt]{1-5}
    \end{tabular}
    \vspace{-.3cm}
    \caption{\small Training times (seconds/step). In red, the increments in training times with respect to JBGNN.}
    \label{tab:times}
\end{table}

The computational complexity of the GNN models for clustering is measured in terms of training times.
Tab.~\ref{tab:times} reports the seconds used by each model to perform one step of gradient descent and, in red, the percentage increments from the times used by JBGNN.
Times are measured on an Nvidia RTX A6000.
\begin{figure}[!ht]
    \centering
    \subfigure[MinCutPool]{
        \includegraphics[width=\columnwidth]{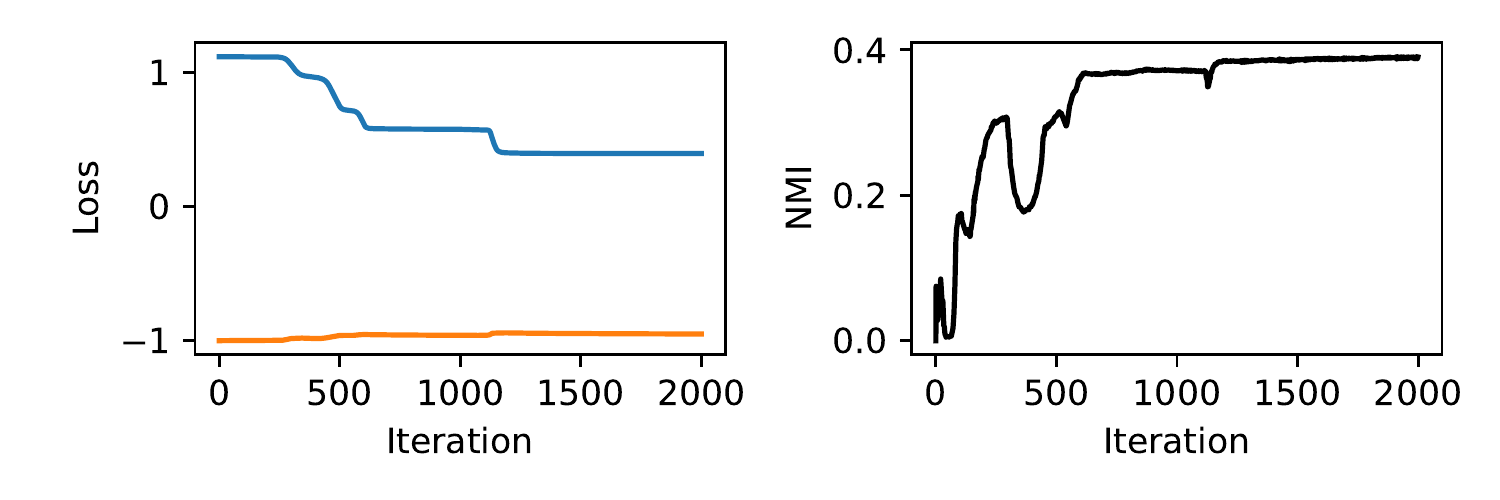}
    }\vspace{-.2cm}
    \subfigure[DMoN]{
        \includegraphics[width=\columnwidth]{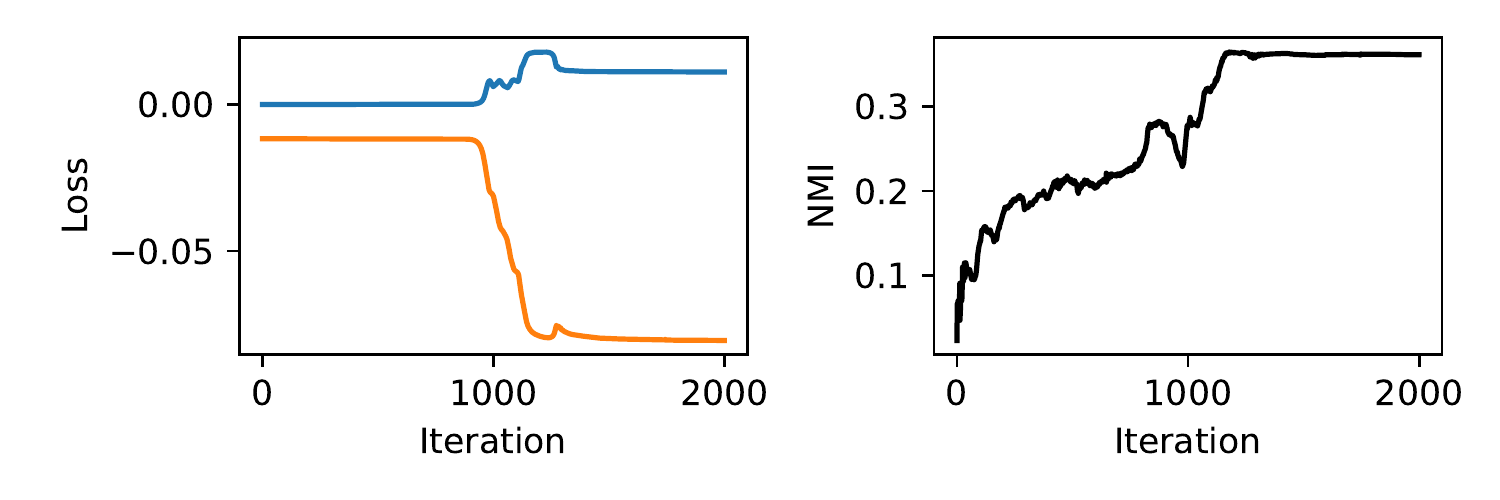}
    }\vspace{-.2cm}
    \subfigure[JBGNN]{
        \includegraphics[width=\columnwidth]{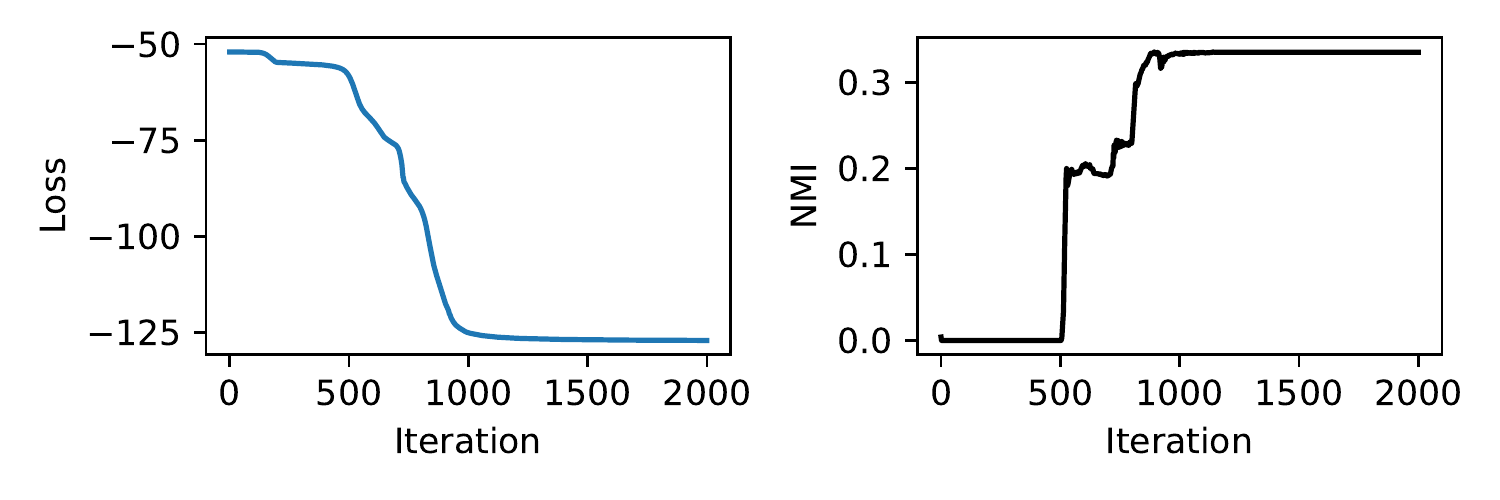}
    }
    \vspace{-.3cm}
    \caption{\small Evolution of the losses and NMI when training MinCutPool, DMoN and JBGNN on Cora.}
    \label{fig:losses}
\end{figure}
Finally, Fig.~\ref{fig:losses} reports the evolution of the losses and the NMI during training. 
LQV losses are in orange and balance losses are in blue.

\section{Discussion and conclusions}

The clustering performance of JBGNN is competitive with respect to the other end-to-end GNN clustering methods: it achieves a comparable ACC and NMI on Cora and Pubmed, and it is the best overall performing method on DBLP.
Diffpool is the worse-performing method in this category, as it does not optimize a proper clustering objective.
On Citeseer, MinCutPool outperforms every other method, including JBGNN, by a large margin. 
As discussed in Section~\ref{sec:lqv}, JBGNN minimizes the LQV of $\bar \X$ while MinCutPool directly minimizes the LQV of $\Sb$, which could make a significant difference in certain datasets.

About the performance of the other methods, SC, DeepWalk, Node2vec, and NetMF always achieve worse ACC and NMI. 
This is expected, as they build embeddings based only on the graph connectivity.
On the other hand, GAE and VGAE also account for the node features and achieve superior performance.
Compared to the end-to-end GNN clustering methods, GAE and VGAE follow a significantly different approach: they optimize a link reconstruction loss to learn node embeddings, which are then clustered in post-processing with $k$-means.
Despite the combination of GAE and VGAE embeddings with $k$-means yields very good performance, it has the disadvantage of being a two-step procedure. In addition, the $k$-means is computationally expensive and is not designed for out-of-samples predictions.
On the other hand, the GNN-based approaches directly output soft cluster assignments, which are efficiently evaluated at inference time and can also be used to implement differentiable operations, such as graph pooling~\citep{grattarola2022understanding}. 

In terms of computational complexity, JBGNN outperforms all other GNN clustering methods thanks to its simplicity and efficient formulation.
In particular, JBGNN achieves a speed improvement ranging from 27\% to  1,376\% compared to the other methods across the four datasets, making it particularly suitable for large-scale applications.
Referring to Fig.~\ref{fig:losses}, MinCutPool and DMoN start to converge between 1000 and 1500 epochs, while JBGNN generally converges earlier.
This, together with the faster updates, makes training the JBGNN significantly faster.

A natural extension of this work is to test JBGNN in other tasks, such as to implement graph pooling~\citep{grattarola2022understanding} in a deep GNN architecture for graph classification.
As a concluding remark, it should be noted that the proposed clustering objective relies on the assumption that all clusters have equal size.
This is reasonable, since clustering is an unsupervised task and the actual sizes are usually unknown.
However, if information about cluster size is available, a clustering objective that puts a bias towards an unbalanced partition could be used instead.

\paragraph{Acknowledgments}
I gratefully acknowledge the support of Nvidia Corporation with the donation of the two RTX A6000 GPUs used in this work.

\bibliographystyle{abbrvnat}
\bibliography{references}

\end{document}